\newtheorem{theorem}{Theorem}
\newtheorem{assumption}{Assumption}
\newtheorem{mydef}{Definition}
\newcommand{\norm}[1]{\left\lVert#1\right\rVert}
\title{\LARGE \bf
Noncooperative Herding With Control Barrier Functions: \\Theory and Experiments
}
\author{Jaskaran Grover$^{*1}$, Nishant Mohanty$^{*1}$, Wenhao Luo$^{2}$, Changliu Liu$^{1}$, Katia Sycara$^{1}$
\thanks{$^{1}$J. Grover, N. Mohanty, C. Liu, K. Sycara are with the Robotics Institute at
		Carnegie Mellon University, 5000 Forbes Avenue, Pittsburgh, PA 15213, USA.
		{\tt\small \{nishantm,jaskarag,cliu6,sycara\}@andrew.cmu.edu}}
		\thanks{$^{2}$W. Luo is with the Department of Computer Science, University of North Carolina at Charlotte.
		{\tt\small wenhao.luo@uncc.edu}}
		\thanks{*Denotes Equal Contribution}}
\begin{document}

\maketitle
\begin{abstract}
In this paper, we consider the problem of protecting a high-value unit from inadvertent attack by a group of agents using defending robots. Specifically, we develop a control strategy for the defending agents that we call ``dog robots" to prevent a flock of ``sheep agents" from breaching a protected zone. We take recourse to control barrier functions to pose this problem and exploit the interaction dynamics between the sheep and dogs to find dogs' velocities that result in the sheep getting repelled from the zone. We solve a QP reactively that incorporates the defending constraints to compute the desired velocities for all dogs. Owing to this, our proposed framework is composable \textit{i.e.} it allows for simultaneous inclusion of multiple protected zones in the constraints on dog robots' velocities.  We provide a theoretical proof of feasibility of our strategy for the one dog/one sheep case. Additionally, we provide empirical results of two dogs defending the protected zone from upto ten sheep averaged over a hundred simulations and report high success rates. We also demonstrate this algorithm experimentally on non-holonomic robots. Videos of these results are available at \url{https://tinyurl.com/4dj2kjwx}.



\end{abstract}
\section{Introduction}
In the last decade, multi-robot systems (MRSs) have advanced from being researched in labs to being deployed in the real-world for solving practical problems \cite{d2012guest,d2003distributed}, \cite{kazmi2011adaptive}. The redundancy offered by an aggregated system provides resilience to faults and distributed acquisition of information. Several control algorithms have been developed that make multiple robots come together to solve team-level, global tasks using local interaction rules \cite{ji2007distributed,lin2004multi}. These algorithms are (a) local (\textit{i.e.} individual robots act on information locally available to them), (b) safe (\textit{i.e.} result in collision-free motions amongst robots) and (c) emergent (\textit{i.e.} global properties result from using local interaction rules) \cite{reynolds1987flocks}.  

These characteristics can be treated as the insider's perspective \textit{i.e.} principles borne in mind by the control engineer when programming their \textit{own} robots for a given task. Complementary to this is the outsider's perspective \textit{i.e.} the perspective of an external agent watching a group carry out a task by executing motions consistent with these characteristics \cite{gong2020partial}. Viewing the the motion of a group from the vantage point of an external observer is equally important. For example, if group is adversarial, potentially by posing a threat to a high-value unit, then the observer must predict the group's motion and conscript robots (the defenders) to defend the unit \cite{walton2021defense,tsatsanifos2021modeling}. This requires the observer to orchestrate motions for their robots to prevent breach of the high-value unit.  In this paper, we investigate how to develop provably correct control inputs for a group of defenders (``dog robots") to prevent another group (the ``sheep agents") from breaching a protected zone.  This is a challenging problem because the dog robots cannot directly command the actuators of the sheep agents, they must rely on their interaction dynamics (collision-avoidance behavior) with the sheep agents to influence the sheep's behavior. This results in a non-collocated control problem. Additionally, this is also challenging because usually there are not as many defending robots as agents in the herd. Therefore, from the perspective of the dog robots, the control problem can become highly underactuated. 

\begin{figure}
	\centering     
	\subfigure[Preventing breaching of protected zone]{\label{fig:dA6}\includegraphics[trim={1.1cm 1.5cm 1.5cm 1.5cm},clip,width=0.45\columnwidth]{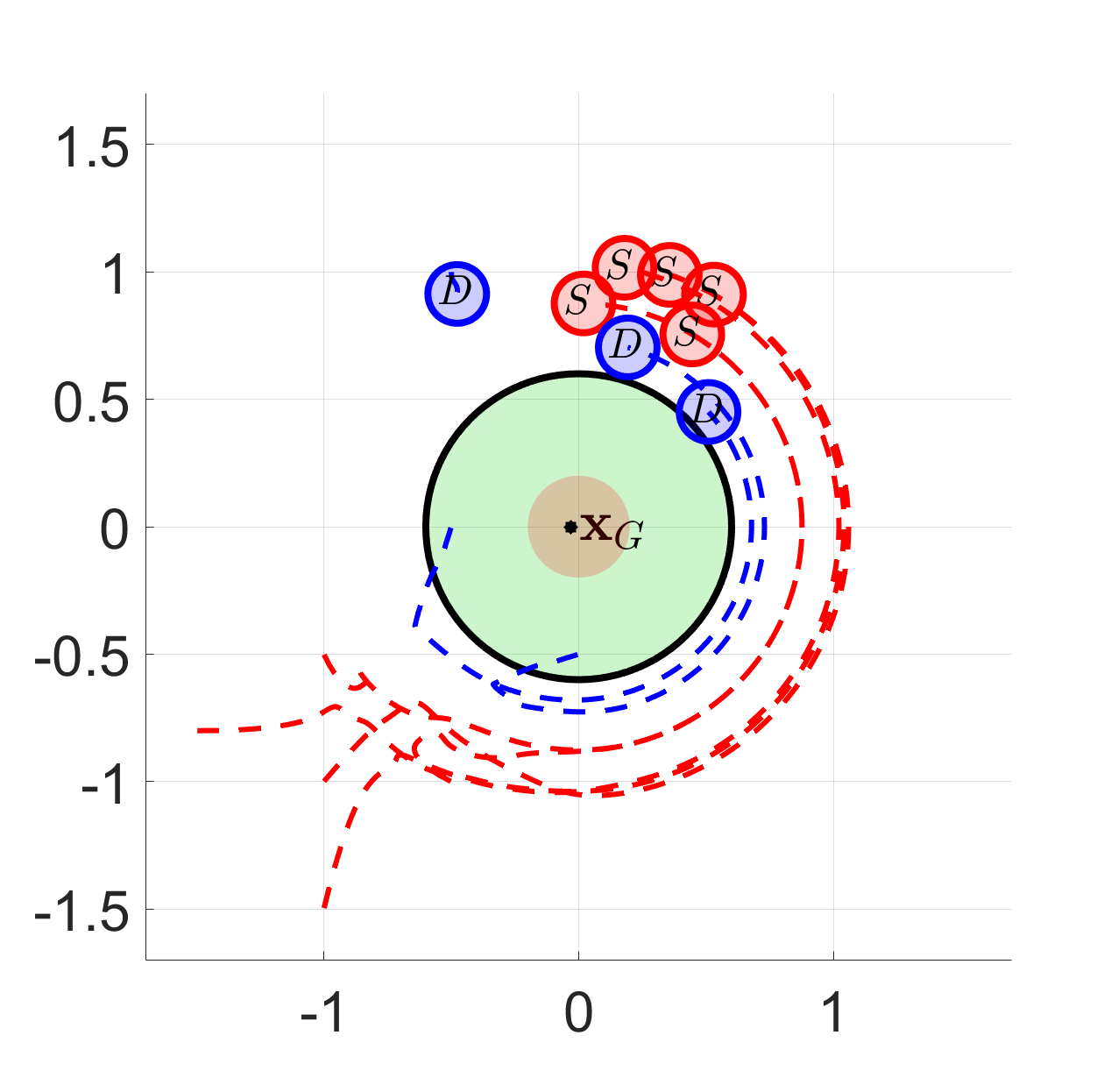}}
	\subfigure[Preventing escape from protected zone]{\label{fig:dB6}\includegraphics[trim={1.1cm 1.5cm 1.5cm 1.5cm},clip,width=0.45\columnwidth]{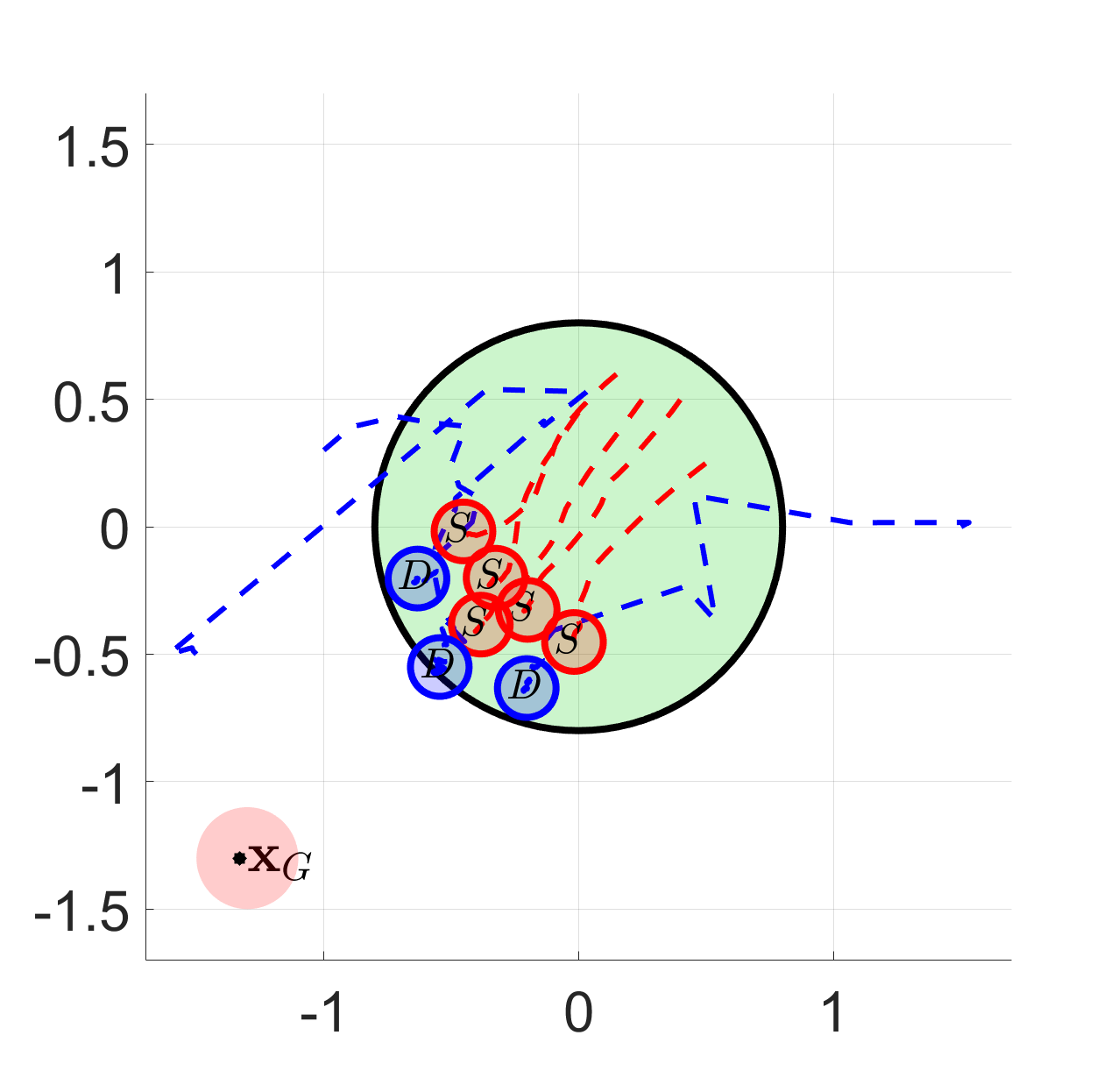}}
	\caption{Demonstration of our results showing (a) how to prevent sheep (red) from breaching a protect zone (green) and (b) preventing sheep (red) from escaping the protected zone using dog robots (blue).  }
	\label{fig:escape_and_defend}
\end{figure}

In this paper, we investigate how to solve this problem using ideas from control barrier functions. Specifically, we develop a centralized control technique that computes velocity inputs for the dog robots to ensure that the sheep agents do not breach a protected zone. We convert these requirements to constraints on the velocities of the dog robots. Our proposed framework is compositional in nature \textit{i.e.} we can consider more sheep as well as more protected zones by just adding more constraints on the velocities. Furthermore, our approach relies on using automatic differentiation and symbolic computation tools, owing to which, we can easily change behavioral requirements from the sheep. For example, instead of preventing them from breaching a protected zone  (Fig. \ref{fig:dA6}), we can prevent them from escaping a zone  (Fig. \ref{fig:dB6}).  We provide numerical results showing the success of our approach for multiple dogs v/s multiple sheep agents. Additionally, to test the repeatability of our algorithm, we conduct Monte Carlo simulations with increasing number of dogs and sheep averaged over 100 runs each and show high-success rates. Finally, we demonstrate our algorithm on real robots and demonstrate that we can prevent breaching of multiple zones from two sheep using one dog robot.

The outline of this paper is as follows: in section \ref{priorwork} we briefly review the prior work in this area. In section \ref{problemformulation}, we give a mathematical formulation of the problem statement. In section \ref{controllerformulation}, we show how to use control barrier functions to derive constraints on velocities of dog robots to pose the requirement of defense against the sheep. We consider additional collision avoidance constraints on dog robots' velocities. In section \ref{results}, we provide both simulation as well as experimental results demonstrating our approach. Finally, we summarize in section \ref{conclusions} with directions for future work.
\section{Prior Work}
 \label{priorwork}
Influencing group behavior has applications beyond just the adversarial context. For example, shepherding behaviors, specifically, are one class of flocking behaviors in which one or more external agents (called shepherds) attempt to control the motion of another group of agents (called a flock) by exerting repulsive forces on them \cite{lien2004shepherding,pierson2017controlling}.  A successful practical demonstration of robotic herding was achieved in the Robot Sheepdog Project \cite{vaughan1998robotA,vaughan2000experiments}. Here an autonomous wheeled mobile robot (the external agent/shepherd) was used to gather a flock of ducks and manoeuvred them to a specified goal position. 

Several prior works have considered the problem of noncooperative shepherding using robots. Some of these include \cite{pierson2017controlling},\cite{pierson2015bio},\cite{licitra2017singleA},\cite{licitra2017singleB},\cite{sebastian2021multi},\cite{bacon2012swarm}. They refer to the shepherding problem as noncooperative because the flock agents are not necessarily adversarial \textit{i.e.} they do not work against the robots, but at the same time are not cooperative because the flock agents repel from the robots. These works exploit this repulsive interaction to develop feedback controllers for the robots to steer the flock agents to a designated region. While successful, one issue common among these works is that they fail to consider the self-motivated dynamics of the flock agents \textit{i.e.} their nominal dynamics without any robots in the picture. As a result, the flock agents' motions are solely driven by repulsions from the robots.  Additionally, these approaches tend to be handcrafted for generating a specific behavior in the sheep, for example: herding to a given location. Finally, many papers do not consider scalability with respect to the number of agents.

Differently from prior work, we do not omit the self-motivated dynamics in the sheeps' motions. We synthesize the inputs for the dogs while considering cohesion, inter-sheep and dog/sheep repulsions and the sheep agents' attraction to their goal (the self-motivated part). Moreover, our proposed approach uses control barrier functions which only requires expected behaviors from sheep to be expressed as symbolic functions. Using automatic differentiation, we can generate constraints on dog velocities for any given behavioral requirement from the sheep. Lastly, in our Monte Carlo study, we obtain high success rates even when there are many more sheep than the number of dog robots in the system. This provides an empirical evidence of scalability of our approach.

\section{Problem Formulation}
 \label{problemformulation}
Suppose there are $n$ ``sheep" agents (the herd) and $m$ dog robots (the defenders). We assume that the sheep are exhibiting flocking dynamics \textit{i.e.} moving towards a common goal while staying close enough to each other and repelling from the dogs. Given this dynamics, it is possible that while en-route to their goal, they end up breaching a high-value unit \textit{i.e.} the protected zone. From the perspective of the dogs, the sheep represent a non-cooperative group because they are not intentionally aiming towards the protected zone but may inadvertently end up breaching it. Therefore, the objective of the dog robots is to steer the sheep away from the protected zone. Let us pose this requirement mathematically.

Denote the position of the $i^{th}$ sheep as $\boldsymbol{x}_{S_i} \in \mathbb{R}^2$ and the collective positions of the herd as $\boldsymbol{x}^{all}_{S} \coloneqq (\boldsymbol{x}_{S_1},\boldsymbol{x}_{S_2},...,\boldsymbol{x}_{S_n})$. Likewise, we denote the position of the $k^{th}$ dog as $\boldsymbol{x}_{D_k} \in \mathbb{R}^2$ and the collective positions of the defending robots' group as $\boldsymbol{x}^{all}_{D} \coloneqq (\boldsymbol{x}_{D_1},\boldsymbol{x}_{D_2},...,\boldsymbol{x}_{D_m})$.   We assume both sheep and dogs have single-integrator dynamics \textit{i.e.} they are velocity controlled. For the $i^{th}$ sheep, we have:
\begin{align}
\label{sheepdynamics}
    \Dot{\boldsymbol{x}}_{S_i} &= \boldsymbol{u}_{S_i}   \\
    &= k_{S} \sum_{j \in \mathcal{S}}\left(1-\frac{R_{S}^{3}}{\norm{\boldsymbol{x}_{S_j} - \boldsymbol{x}_{S_i}}^{3}}\right) (\boldsymbol{x}_{S_j} - \boldsymbol{x}_{S_i}) \nonumber \\
&+k_{G}\left(\boldsymbol{x}_{G}-\boldsymbol{x}_{S_i}\right)
+k_{D} \sum_{k \in \mathcal{D}}  \frac{\boldsymbol{x}_{S_i} - \boldsymbol{x}_{D_k}}{\norm{\boldsymbol{x}_{S_i} - \boldsymbol{x}_{D_k}}^{3}} \nonumber \\
&\coloneqq \boldsymbol{f}_i(\boldsymbol{x}_{S_1},...,\boldsymbol{x}_{S_n},\boldsymbol{x}_{D_1},...,\boldsymbol{x}_{D_n})
\end{align}
Here the first term represents cohesion of the flock, the second represents attraction to goal and the third represents repulsion from dog robots. The attraction to the goal represents the self-motivated part of the dynamics of the sheep agents. This term is often neglected in prior work. $R_S$ is the safety radius for sheep $i$ to avoid collisions with the other sheep, $x_G$ is its desired goal position (common for all sheep) and $k_S,k_G,k_D$ are proportional gains corresponding to forces in the dynamics. For each dog we have:
\begin{align}
\label{doginput}
 \Dot{\boldsymbol{x}}_{D_k} = \boldsymbol{u}_{D_k} \hspace{0.2cm} \forall k \in \{1,2,\cdots,m\}
\end{align}
We denote the protected zone as $\mathcal{P} \subset \mathbb{R}^2$ and for this paper, assume that it is a disc centered at $\boldsymbol{x}_{p}$ and radius $R_p$:
\begin{align}
\label{protected_zone_def}
    \mathcal{P} \coloneqq \{\boldsymbol{x} \in \mathbb{R}^2 \vert \norm{\boldsymbol{x}-\boldsymbol{x}_p}\leq R_p\}
\end{align}
We denote the set excluding the protected zone as $\mathcal{P}^c \coloneqq \mathbb{R}^2\backslash \mathcal{P}$. The sheep are assumed to have no knowledge about the presence of $\mathcal{P}$. The dog robots need to ensure that the sheep remain in $\mathcal{P}^c$ if they are initially in $\mathcal{P}^c$ by finding suitable control inputs $\{\boldsymbol{u}_{D_1},\cdots,\boldsymbol{u}_{D_m}\}$. We make the following assumption on the dog's knowledge before posing the problem:
\begin{assumption}
	\label{ass1}
	The dog robots have knowledge about the sheep's dynamics \textit{i.e.} \eqref{sheepdynamics} and can measure the sheep's positions accurately. 
\end{assumption}
This is not a stringent assumption because if the  dynamics are unknown, the dog robots can learn the dynamics online using multiagent system identification algorithms, some of which we have developed in our prior work \cite{grover2020parameter,grover2020feasible} and use certainty equivalence to design the controllers. We can pose the dog robots' problem as follows:
\begin{mydef}
Assuming that the initial positions of the sheep $\boldsymbol{x}^{all}_S(0) \in  \mathcal{P}^c$, the dog robots' problem is to synthesize controls $\{\boldsymbol{u}_{D_1},\cdots, \boldsymbol{u}_{D_m}\}$ such that $\boldsymbol{x}^{all}_S(t) \in  \mathcal{P}^c$ $\forall t\geq 0$. If $\boldsymbol{x}^{all}_S(0) \notin  \mathcal{P}^c$, the dog robots' problem is to synthesize controls $\{\boldsymbol{u}_{D_1},\cdots, \boldsymbol{u}_{D_m}\}$ such that $\boldsymbol{x}^{all}_S(t) \leadsto
 \mathcal{P}^c$  in a finite time.
\end{mydef}
Additionally, we require that the dog robots never collide with the sheep.
In the next section, we show how to address this problem using control barrier functions.
\section{Controller Design}
 \label{controllerformulation}

In this section, we discuss our proposed approach to solve the problem of defending the protected zone as stated before. Given the protected zone as defined \eqref{protected_zone_def}, we first pose the requirement for defending against one sheep, say sheep $i$ located at $\boldsymbol{x}_{S_i}$. Subsequently, we will generalize this to the rest of the sheep in the herd. For this sheep, define a safety index $h(\cdot):\mathbb{R}^2 \longrightarrow \mathbb{R}$ as follows:
\begin{align}
    h = \norm{\boldsymbol{x}_{S_i} - \boldsymbol{x}_{p}}^2 - R_p^2
\end{align}
By construction, $h \geq 0$ $\forall \boldsymbol{x}_{S_i} \in \mathcal{P}^c$ \textit{i.e.} non-negative whenever $i$ is on the boundary or outside the protected zone. Thus, assuming that at $t=0$, $h (\boldsymbol{x}_{S_i}(0)) \geq 0$, we require $h (\boldsymbol{x}_{S_i}(t)) \geq 0$ $\forall t \geq 0$. Treating $h(\cdot)$ as a control barrier function \cite{ames2019control}, this can be achieved if the derivative of $h(\cdot)$ satisfies the following constraint:
\begin{align}
\label{hdot}
    &\Dot{h}(\boldsymbol{x}_{S_1},\cdots,\boldsymbol{x}_{S_n},\boldsymbol{x}_{D_1},\cdots,\boldsymbol{x}_{D_m}) + p_1h(\boldsymbol{x}_{S_i}) \geq 0  \nonumber \\
    \implies &2(\boldsymbol{x}_{S_i} - \boldsymbol{x}_{p})^T\Dot{\boldsymbol{x}}_{S_i}  + p_1h(\boldsymbol{x}_{S_i}) \geq 0 \nonumber \\
    \implies & 2(\boldsymbol{x}_{S_i} - \boldsymbol{x}_{p})^T \boldsymbol{f}_{i} + p_1h(\boldsymbol{x}_{S_i}) \geq 0
\end{align}
Define $\boldsymbol{x}=(\boldsymbol{x}^{all}_S,\boldsymbol{x}^{all}_D)$, we rewrite this as
\begin{align}
\label{hdot2}
 2(\boldsymbol{x}_{S_i} - \boldsymbol{x}_{p})^T \boldsymbol{f}_{i}(\boldsymbol{x}) + p_1h(\boldsymbol{x}_{S_i}) \geq 0
\end{align}
Here $p_1$ is a design parameter that we choose to ensure that
\begin{align}
    \label{eq:condition_on_p1}
    p_1 > 0 \quad \text{and} \quad p_1 > -\frac{\Dot{h}(\boldsymbol{x}(0))}{h(\boldsymbol{x}(0))}
\end{align}
The first condition on $p_1$ requires that the pole is real and negative. The second depends on the initial positions $\boldsymbol{x}(0)$ of all the sheep and dogs relative to the protected zone.  Now while \eqref{hdot} depends on the positions of the sheep and dogs,  it is the velocities of the dogs that are directly controllable not their positions \eqref{doginput}. Since $\boldsymbol{u}^{all}_D$ does not show up in \eqref{hdot}, we define another function $v(\cdot):\mathbb{R}^{2(m+n)} \longrightarrow \mathbb{R}$:
\begin{align}
\label{vdef}
    v = \Dot{h} + p_1h
\end{align}
Like before, in order to ensure $v \geq 0$ is always maintained, its derivative needs to satisfy 
\begin{align}
\label{vdot}
    \Dot{v}(\boldsymbol{x}) + p_2v(\boldsymbol{x}) \geq 0.
\end{align}
Here $p_2$ is another design parameter which we choose $p_2$ to ensure that the following is satisfied at $t=0$
\begin{align}
\label{eq:condition_on_p2}
    p_2 > 0 \quad \text{and} \quad p_2 > -\frac{\Ddot{h}(\boldsymbol{x}(0)) + p_1\Dot{h}(\boldsymbol{x}(0))}{\Dot{h}(\boldsymbol{x}(0)) + p_1h(\boldsymbol{x}(0))}
\end{align}
Using \eqref{vdef} in  \eqref{vdot}, we get:
\begin{align}
\label{timederivatives}
  \Ddot{h}(\boldsymbol{x}) + (p_1+p_2)\Dot{h}(\boldsymbol{x}) + p_1p_2h(\boldsymbol{x}) \geq 0 \nonumber \\
  \implies \Ddot{h}(\boldsymbol{x}) + \alpha \Dot{h}(\boldsymbol{x}) + \beta h(\boldsymbol{x}) \geq 0 
\end{align}
where we have defined $\alpha \coloneqq p_1 +p_2$ and $\beta \coloneqq p_1p_2$.
The time derivatives of the control-barrier function $h(\cdot)$ required  in \eqref{timederivatives} are obtained as:
\begin{align}
\label{hd}
    \Dot{h}(\boldsymbol{x}) &= 2(\boldsymbol{x}_{S_i} - \boldsymbol{x}_{P})^T\Dot{\boldsymbol{x}}_{S_i} \nonumber \\
    &=2(\boldsymbol{x}_{S_i} - \boldsymbol{x}_{P})^T \boldsymbol{f}_i(\boldsymbol{x}_{S_1},\cdots,\boldsymbol{x}_{S_n},\boldsymbol{x}_{D_1},\cdots,\boldsymbol{x}_{D_m}) 
\end{align}
    \begin{align}
    \label{hdd}
    \Ddot{h}(\boldsymbol{x}) &= 2\Dot{\boldsymbol{x}}_{S_i}^T\Dot{\boldsymbol{x}}_{S_i} \nonumber \\&+ 2(\boldsymbol{x}_{S_i}-\boldsymbol{x}_{P})^T\bigg(\sum_{j=1}^{n}\mathbb{J}_{ji}^S \Dot{\boldsymbol{x}}_{S_i} + \sum_{k=1}^m\mathbb{J}_{ki}^D\boldsymbol{u}_{D_k}\bigg)  \nonumber \\
     &= 2 \boldsymbol{f}^T_{i} \boldsymbol{f}_{i}  \nonumber  \\&+ 2(\boldsymbol{x}_{S_i}-\boldsymbol{x}_{P})^T\bigg(\sum_{j=1}^{n}\mathbb{J}_{ji}^S \boldsymbol{f}_{i} + \sum_{k=1}^m\mathbb{J}_{ki}^D\boldsymbol{u}_{D_k}\bigg)
\end{align}
where $\mathbb{J}^S_{ji}$  and $\mathbb{J}^D_{ki}$ are
\begin{align}
    \mathbb{J}^S_{ji} &\coloneqq \nabla_{\boldsymbol{x}_{S_j}} \boldsymbol{f}_{i} (\boldsymbol{x}_{S_1},\cdots,\boldsymbol{x}_{S_n},\boldsymbol{x}_{D_1},\cdots,\boldsymbol{x}_{D_m})  \nonumber \\
     \mathbb{J}^D_{ki} &\coloneqq \nabla_{\boldsymbol{x}_{D_k}} \boldsymbol{f}_{i} (\boldsymbol{x}_{S_1},\cdots,\boldsymbol{x}_{S_n},\boldsymbol{x}_{D_1},\cdots,\boldsymbol{x}_{D_m})  \nonumber
\end{align}
Note here that $\Ddot{h}(\boldsymbol{x})$ contains the velocities of dogs as we wanted. Using  \eqref{hd} and \eqref{hdd} in \eqref{timederivatives}, we get the following linear constraints on dog velocities to ensure that the $i^{th}$ sheep stays outside the protected zone $\mathcal{P}$: 
\begin{align}
    \label{herdingcon1}
    A^H_i \boldsymbol{u}^{all}_D\leq b^H_i, \hspace{0.5cm} \mbox{where}
\end{align}
\begin{align*}
    A^H_{i} &\coloneqq  (\boldsymbol{x}_{P} -\boldsymbol{x}_{S_i})^T
    \begin{bmatrix}
     \mathbb{J}_{1i}^D & \mathbb{J}_{2i}^D & ..... & \mathbb{J}_{mi}^D
    \end{bmatrix}\\
      b^H_{i} &\coloneqq\boldsymbol{f}^T_i\boldsymbol{f}_i
      + (\boldsymbol{x}_{S_i} - \boldsymbol{x}_{P})^T\sum_{j=1}^{n}\mathbb{J}_{ji}^S \boldsymbol{f}_j\\
      &+ \alpha(\boldsymbol{x}_{S_i}-\boldsymbol{x}_{P})^T\boldsymbol{f}_i + \beta\frac{h}{2} 
\end{align*}
To ensure all $n$ sheep stay away from $\mathcal{P}$, we compose constraints \eqref{herdingcon1} for all the herd as follows:
\begin{align}
\label{allherd}
    \left[\begin{matrix}
A^H_1 \\
\vdots \\
A^H_n
\end{matrix}\right] \boldsymbol{u}^{all}_D \leq \left[\begin{matrix}
b^H_1 \\
\vdots \\
b^H_n
\end{matrix}\right] \implies \mathcal{A}^H\boldsymbol{u}^{all}_D \leq \boldsymbol{b}^H
\end{align}
Here $\mathcal{A}^H \in \mathbb{R}^{n \times 2m}$  and $\boldsymbol{b}^H \in \mathbb{R}^{n}$. Given these constraints on the dogs' velocities, we can pose the following QP that searches for the min-norm velocities that satisfies these constraints
\begin{align}
\label{dog_control}
    \boldsymbol{u}^{*all}_D = \underset{\boldsymbol{u}^{all}_D}{\arg \min }\norm{\boldsymbol{u}^{all}_D}^{2} \nonumber \\
    \text{subject to} \quad \mathcal{A}^H\boldsymbol{u}^{all}_D \leq \boldsymbol{b}^H  
\end{align}
Here $\boldsymbol{u}^{*all}_D$ are the optimal velocities for all the dog robots to ensure both defending $\mathcal{P}$ and collision avoidance simultaneously.   By construction, our approach is centralized \textit{i.e.} it computes velocities of all dog robots together. Future work will consider ways to decentralize this approach.  \\
\textbf{Considering multiple protected zones:} While in the above derivation, we considered preventing the sheep from breaching only one protected zone, we can just as easily consider another protected zone by formulating similar constraints $\mathcal{A}^H_2\boldsymbol{u}^{all}_D \leq \boldsymbol{b}^H_2$ on the dogs' velocities. By augmenting  \eqref{dog_control} with these constraints for the other zone, we will be able to defend both zones from all sheep simultaneously. This compositionality is a benefit offered by our constraint based framework. An experimental validation of this is shown in Fig. \ref{fig:defendingprotectedzon3}. In the following discussion, we prove that for the one dog v/s one sheep case, \eqref{dog_control} is always feasible:
\begin{theorem}
\label{theorem_caseA}
If there is one dog and one sheep, then \eqref{dog_control} always has a solution.
\end{theorem}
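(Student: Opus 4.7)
The plan is to exploit the extreme simplification that occurs when $n=m=1$: the constraint matrix in \eqref{allherd} collapses to a single row $A^H_1\in\mathbb{R}^{1\times 2}$ with a single scalar right-hand side $b^H_1$, so the QP \eqref{dog_control} is just the projection of the origin onto a single linear half-plane in $\mathbb{R}^2$. Because the objective $\norm{\boldsymbol{u}^{all}_D}^2$ is strongly convex and coercive, the minimizer exists as soon as the feasible set is nonempty. A single linear inequality in $\mathbb{R}^2$ is infeasible only in the degenerate case $A^H_1=\boldsymbol{0}$ with $b^H_1<0$, so my entire task reduces to showing that $A^H_1\neq\boldsymbol{0}$ whenever the sheep lies outside the protected zone.

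Next I would compute $\mathbb{J}^D_{11}=\nabla_{\boldsymbol{x}_{D_1}}\boldsymbol{f}_1$ explicitly. With $n=1$ the cohesion sum in \eqref{sheepdynamics} vanishes identically (the sole $j=1$ term is zero) and the goal-attraction term does not depend on $\boldsymbol{x}_{D_1}$, so only the repulsion $k_D(\boldsymbol{x}_{S_1}-\boldsymbol{x}_{D_1})/\norm{\boldsymbol{x}_{S_1}-\boldsymbol{x}_{D_1}}^3$ contributes. Writing $\boldsymbol{r}\coloneqq\boldsymbol{x}_{S_1}-\boldsymbol{x}_{D_1}$ and $r\coloneqq\norm{\boldsymbol{r}}>0$ (the no-collision hypothesis), a routine differentiation gives
\[
\mathbb{J}^D_{11} \;=\; \frac{k_D}{r^3}\left(-I + \frac{3\,\boldsymbol{r}\boldsymbol{r}^T}{r^2}\right),
\]
and hence, abbreviating $\boldsymbol{q}\coloneqq\boldsymbol{x}_P-\boldsymbol{x}_{S_1}$,
\[
A^H_1 \;=\; \frac{k_D}{r^3}\left(-\boldsymbol{q}^T + \frac{3(\boldsymbol{q}^T\boldsymbol{r})}{r^2}\,\boldsymbol{r}^T\right).
\]

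The main (and essentially only nontrivial) step is to verify that this row vector cannot vanish. I plan to handle it by decomposing $\boldsymbol{q}$ into components parallel and perpendicular to $\boldsymbol{r}$: writing $\boldsymbol{q}=q_\parallel\hat{\boldsymbol{r}}+q_\perp\hat{\boldsymbol{r}}^{\perp}$, the bracketed expression collapses to $2q_\parallel\hat{\boldsymbol{r}}^T - q_\perp(\hat{\boldsymbol{r}}^{\perp})^T$, which vanishes only when $q_\parallel=q_\perp=0$, i.e.\ $\boldsymbol{x}_{S_1}=\boldsymbol{x}_P$. But $\boldsymbol{x}_{S_1}\in\mathcal{P}^c$ forces $\norm{\boldsymbol{q}}\geq R_p>0$, contradicting this. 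Therefore $A^H_1\neq\boldsymbol{0}$, the half-plane $A^H_1\boldsymbol{u}_{D_1}\leq b^H_1$ is nonempty, and the QP admits a unique minimizer---the origin when $b^H_1\geq 0$, and the orthogonal projection $\boldsymbol{u}_{D_1}^{*}=\bigl(b^H_1/\norm{A^H_1}^2\bigr)(A^H_1)^T$ onto the bounding line when $b^H_1<0$.
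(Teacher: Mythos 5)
Your proof is correct, and its key step---showing $A^H_1\neq\boldsymbol{0}$ via the structure of $\mathbb{J}^D_{11}$---is the same idea the paper uses, though you execute it by an explicit parallel/perpendicular decomposition of $\boldsymbol{x}_P-\boldsymbol{x}_{S_1}$ while the paper computes $\det(\mathbb{J}^D_{11})=-2k_D^2/\norm{\boldsymbol{x}_D-\boldsymbol{x}_S}^3\neq 0$ and invokes nonsingularity. Your version is slightly more careful: you make explicit that the product $(\boldsymbol{x}_P-\boldsymbol{x}_S)^T\mathbb{J}^D_{11}$ can only vanish when $\boldsymbol{x}_S=\boldsymbol{x}_P$, and you justify excluding that case via $\boldsymbol{x}_S\in\mathcal{P}^c$, a premise the paper leaves implicit. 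The more substantive divergence is that you dispense entirely with the paper's second infeasibility mode. The paper worries about $b^H\to-\infty$ and rules it out by introducing an extra boundedness assumption ($\norm{\boldsymbol{x}_S-\boldsymbol{x}_G}\leq M_1$, $\norm{\boldsymbol{x}_S-\boldsymbol{x}_P}\leq M_2$, $\norm{\boldsymbol{x}_S-\boldsymbol{x}_D}\geq M_3$) and a chain of norm estimates; you instead observe that a single inequality $A^H_1\boldsymbol{u}\leq b^H_1$ with $A^H_1\neq\boldsymbol{0}$ defines a nonempty half-plane for \emph{any} finite $b^H_1$, and that $b^H_1$ is automatically finite at any nonsingular configuration. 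For the theorem as literally stated (existence of a QP solution at each state), your argument is cleaner and needs fewer hypotheses; what the paper's bound buys in exchange is a uniform lower estimate on $b^H$ along trajectories, which is relevant to keeping the resulting control magnitudes bounded but is not needed for feasibility itself. Both arguments tacitly require $\boldsymbol{x}_S\neq\boldsymbol{x}_D$; you flag this as a no-collision hypothesis, which is the right thing to do.
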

\begin{proof}
Let the position of the dog be $\boldsymbol{x}_D$ and that of the sheep be $\boldsymbol{x}_S$. The sheep dynamics can be simplified to 
\begin{align}
    \label{eq:dyn11}
    \dot{\boldsymbol{x}}_S = \boldsymbol{f}(\boldsymbol{x}_{S},\boldsymbol{x}_{D})= k_{G}\left(\boldsymbol{x}_{G}-\boldsymbol{x}_{S}\right)
+k_{D}  \frac{\boldsymbol{x}_{S} - \boldsymbol{x}_{D}}{\norm{\boldsymbol{x}_{S} - \boldsymbol{x}_{D}}^{3}}
\end{align}
The only case when \eqref{dog_control} does not have a solution is when the defending constraint is infeasible \textit{i.e.} when $A^H\boldsymbol{u}_D \leq b^H$ is infeasible. This can occur when 
\begin{itemize}
    \item  either when $A^H = \boldsymbol{0}$ and $b^H<0$ (\textcolor{blue}{possibility 1})
    \item or when $b^H = -\infty$ (\textcolor{blue}{possibility 2}).
\end{itemize} 
For this case $A^H$ is:
\begin{align}
    A^H = (\boldsymbol{x}_{P} -\boldsymbol{x}_{S})^T
     \mathbb{J}_{11}^D
\end{align}
Thus, if $\mathbb{J}_{11}^D$ is non-singular, $(\boldsymbol{x}_{P} -\boldsymbol{x}_{S})^T
     \mathbb{J}_{11}^D \neq \boldsymbol{0}$. From our calculations, we find that the determinant of $\mathbb{J}_{11}^D$ is
\begin{align}
    det(\mathbb{J}_{11}^D) = \frac{-2k_D^2}{\norm{ \boldsymbol{x}_{D} -\boldsymbol{x}_{S}}^3} 
\end{align}
As long as the distance between the dog and the sheep is finite, $det(\mathbb{J}_{11}^D)$ is always non zero. Thus, there exists no null space for the jacobian matrix $\mathbb{J}_{11}^D$. This implies $A^H \neq \boldsymbol{0}$ $\forall \boldsymbol{x}_S \in \mathbb{R}^n, \boldsymbol{x}_D \in \mathbb{R}^2$. This rules out \textcolor{blue}{possibility 1} for infeasibility. For \textcolor{blue}{possibility 2}, we need to examine when does $b^H \longrightarrow -\infty$. The expression for $b^H$ is:
\begin{align*}
      b^H = \boldsymbol{f}^T\boldsymbol{f}
      + (\boldsymbol{x}_{S} - \boldsymbol{x}_{P})^T\mathbb{J}_{11}^S \boldsymbol{f}
      + \alpha(\boldsymbol{x}_{S}-\boldsymbol{x}_{P})^T\boldsymbol{f} + \beta\frac{h}{2}
\end{align*}
We want to find the worst case lower bound of $b^H$. Here $ \boldsymbol{f}^T\boldsymbol{f} \geq 0$ always. We assume that at the current time step, the sheep is outside the $\mathcal{P}$, this ensures $\beta \frac{h}{2} \geq 0$.
\begin{assumption}
   Assume that the following bounds hold $\norm{\boldsymbol{x}_S-\boldsymbol{x}_G} \leq M_1, \norm{\boldsymbol{x}_S-\boldsymbol{x}_P} \leq M_2$ and $\norm{\boldsymbol{x}_S-\boldsymbol{x}_D} \geq M_3$.
\end{assumption}
With these assumptions, we can lower bound $b^H$ as follows: 
\begin{align}
    b^H \geq (\boldsymbol{x}_{S} - \boldsymbol{x}_{P})^T\mathbb{J}_{11}^S \boldsymbol{f}
      + \alpha(\boldsymbol{x}_{S}-\boldsymbol{x}_{P})^T\boldsymbol{f} \nonumber \\
      \geq -(\sigma_{max}(\mathbb{J}_{11})+\alpha)\norm{\boldsymbol{f}}\norm{\boldsymbol{x}_S-\boldsymbol{x}_P} \nonumber \\
       \geq -(\sigma_{F}(\mathbb{J}_{11})+\alpha)\norm{\boldsymbol{f}}\norm{\boldsymbol{x}_S-\boldsymbol{x}_P}
\end{align}
Here $\norm{\boldsymbol{f}} \leq k_G\norm{\boldsymbol{x}_S-\boldsymbol{x}_G} + \frac{k_D}{\norm{\boldsymbol{x}_S-\boldsymbol{x}_D}^2}$ using triangle inequality on \eqref{eq:dyn11}. This gives   $\norm{\boldsymbol{x}_S-\boldsymbol{x}_P}\norm{\boldsymbol{f}} \leq k_GM_1M_2 + \frac{k_DM_2}{M_3^2}$.  We can show that $\sigma_{F}(\mathbb{J}_{11})  \leq \lambda_M \coloneqq  \sqrt{2k_G^2 + 5\frac{k^2_D}{M^6_3} + \frac{2k_Gk_D}{M_3^2}}$. 
Thus, using this, we obtain the following lower bound for $b^H$
\begin{align}
    b^H         \geq -(\lambda_M+\alpha)\bigg(k_GM_1M_2 + \frac{k_DM_2}{M_3^2} \bigg)
\end{align}
This shows that $b^H$ is lower bounded and thus does not reach $-\infty$. Hence \textcolor{blue}{possibility 2} is also ruled out. Thus, \eqref{dog_control} is always feasible.
\end{proof}
\subsection{Incorporating collision avoidance constraints}
The defending constraints we posed above do not guarantee that the dog robots won't collide with the sheep. Even though the sheep dynamics have repulsions from the dogs, the velocities computed can result in aggressive behavior. Thus, we augment the defending constraints with additional constraints to ensure collision free behavior.  Following the approach in \cite{wang2017safety}, we define a pairwise safety index $b^{ik}(\cdot): \mathbb{R}^2 \times \mathbb{R}^{2m} \longrightarrow \mathbb{R}$ as:
\begin{align*}
    b^{ik}(\boldsymbol{x}_{S_i} ,\boldsymbol{x}_{D_1},\cdots,\boldsymbol{x}_{D_m}) &= \norm{\boldsymbol{x}_{S_i} - \boldsymbol{x}_{D_k}}^2 - R_S^2 \nonumber \\
    &=\norm{\boldsymbol{x}_{S_i} - C_k\boldsymbol{x}^{all}_D}^2 - R_S^2
\end{align*}
 $ b^{ik}(\cdot) \geq 0$ iff dog $k$ is atleast $R_S$ distance away from sheep $i$. Here $C_k$ is a matrix defined appropriately to extract the position of the $k^{th}$ dog from $\boldsymbol{x}^{all}_D$.  If $b^{ik}(\boldsymbol{x}_{S_i}(0) ,\boldsymbol{x}^{all}_D(0))\geq 0$ $\forall k \in \{1,2,\cdots,m\}$, we would like to ensure that $b^{ik}(\boldsymbol{x}_{S_i}(t) ,\boldsymbol{x}^{all}_{D}(t)) \geq 0$ $\forall t \geq 0$ and $\forall k \in \{1,2,\cdots,m\}$. This can be achieved by requiring that
\begin{align}
     \Dot{b}^{ik}(\boldsymbol{x}) + \gamma b^{ik}(\boldsymbol{x}) \geq 0 \hspace{0.5cm} \forall k \in \{1,2,\cdots,m\}
\end{align}
where $\gamma > 0$. This gives us a total of $m$ linear constraints on the velocity of the dog robots for avoiding collisions with the $i^{th}$ sheep:
\begin{align}
    \label{collisioncon1}
    A^C_i \boldsymbol{u}^{all}_D\leq b^C_i 
\end{align}
where,
\begin{align}
    A^C_i &=  
     \left[\begin{matrix}
     (\boldsymbol{x}_{S_i} -\boldsymbol{x}_{D_1})^TC_1 \\ \vdots \\ (\boldsymbol{x}_{S_i} -\boldsymbol{x}_{D_m})^TC_m 
   \end{matrix}\right]  \nonumber \\
      b^C_i &= \left[\begin{matrix}\frac{\gamma}{2} b^{i1} + (\boldsymbol{x}_{S_i} -\boldsymbol{x}_{D_1})^T \boldsymbol{f}_{i} \\ \vdots \\ \frac{\gamma}{2} b^{im} + (\boldsymbol{x}_{S_i} -\boldsymbol{x}_{D_m})^T \boldsymbol{f}_{i}\end{matrix}\right] 
\end{align}
To ensure all collision avoidance with all $n$ sheep\footnote{inter-dog collision avoidance constraints can also be added following a similar procedure.},  we compose constraints \eqref{collisioncon1} for all the herd as follows:
\begin{align}
\label{allcol}
    \left[\begin{matrix}
A^C_1 \\
\vdots \\
A^C_n
\end{matrix}\right] \boldsymbol{u}^{all}_D \leq \left[\begin{matrix}
b^C_1 \\
\vdots \\
b^C_n
\end{matrix}\right] \implies \mathcal{A}^C\boldsymbol{u}^{all}_D \leq \boldsymbol{b}^C
\end{align}
Given the defending \eqref{allherd} and collision avoidance \eqref{allcol} constraints on the dogs' velocities, we compose them together using the following QP:
\begin{align}
\label{dog_control_with_col}
    \boldsymbol{u}^{*all}_D = \underset{\boldsymbol{u}^{all}_D}{\arg \min }\norm{\boldsymbol{u}^{all}_D}^{2}\\
    \text{subject to} \quad \mathcal{A}^H\boldsymbol{u}^{all}_D \leq \boldsymbol{b}^H    \notag\\        
\mathcal{A}^C\boldsymbol{u}^{all}_D \leq \boldsymbol{b}^C  \notag
\end{align}

Here $\boldsymbol{u}^{*all}_D$ are the optimal velocities for all the dog robots to ensure both defending and collision avoidance simultaneously. The cost function penalizes the total speed of the dog robots, thus encouraging them to minimize their movement.  

\section{Results}
\label{results}
In this section, we show results of our approach by testing it on different scenarios consisting of varying numbers of sheep and dog and varying their initial positions. Additionally, we also run validate these results experimentally. We perform several experiments with nonholonomic Khepera robots and demonstrate how our algorithm find velocities for one dog to simultaneously defend multiple protected zones from  multiple sheep.
\begin{figure*}
	\centering     
	\subfigure[Three dog robots v/s three sheep robots. ]{\label{fig:dA0}\includegraphics[trim={1.1cm 1.5cm 1.5cm 1.5cm},clip,width=0.66\columnwidth]{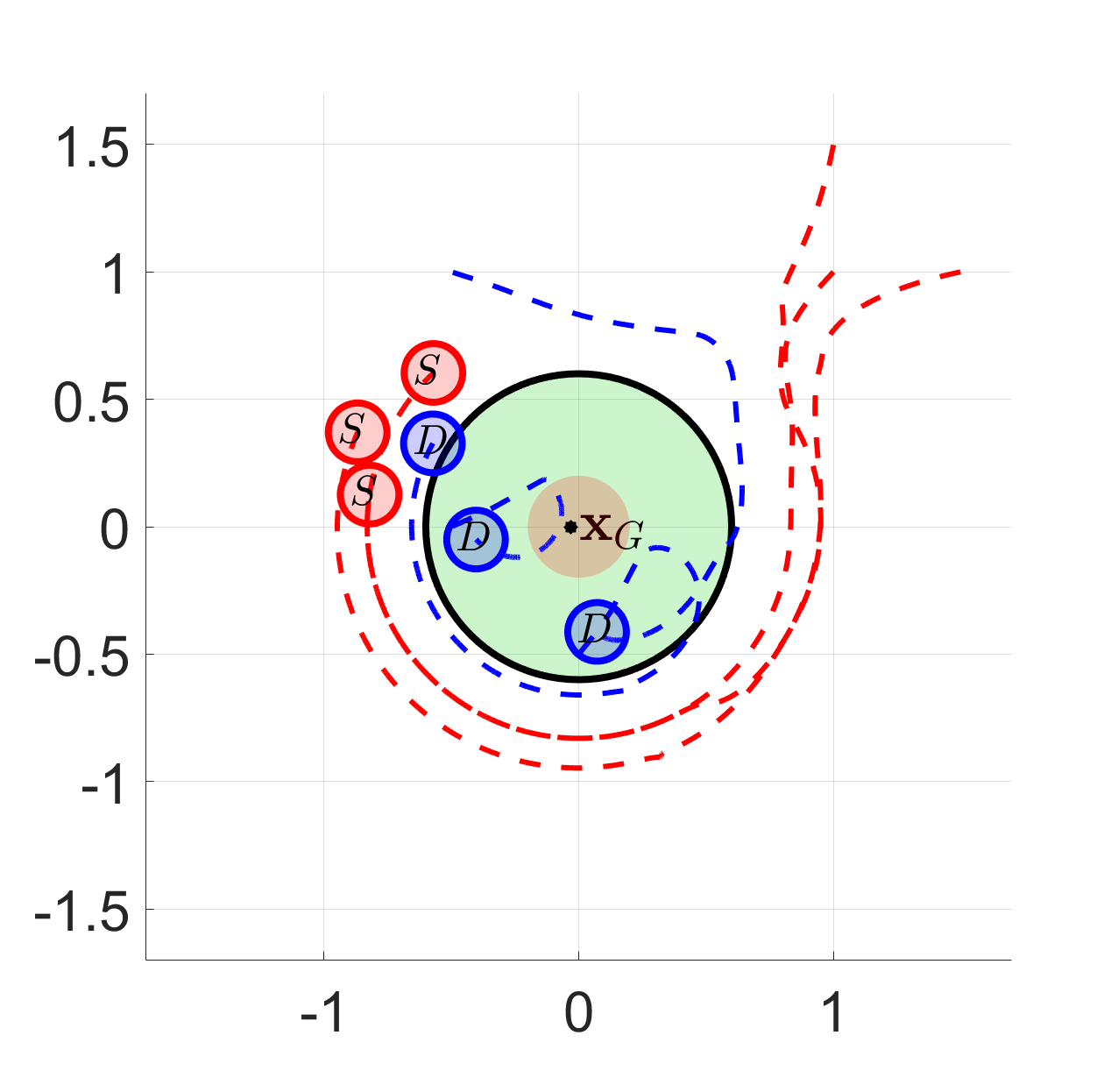}}
	\subfigure[Three dog robots v/s five sheep. ]{\label{fig:dB0}\includegraphics[trim={1.1cm 1.5cm 1.5cm 1.5cm},clip,width=0.66\columnwidth]{Images/3v5withcollision.png}}
	\subfigure[Three dog robots v/s three sheep robots. ]{\label{fig:dC0}\includegraphics[trim={1.1cm 1.5cm 1.5cm 1.5cm},clip,width=.66\columnwidth]{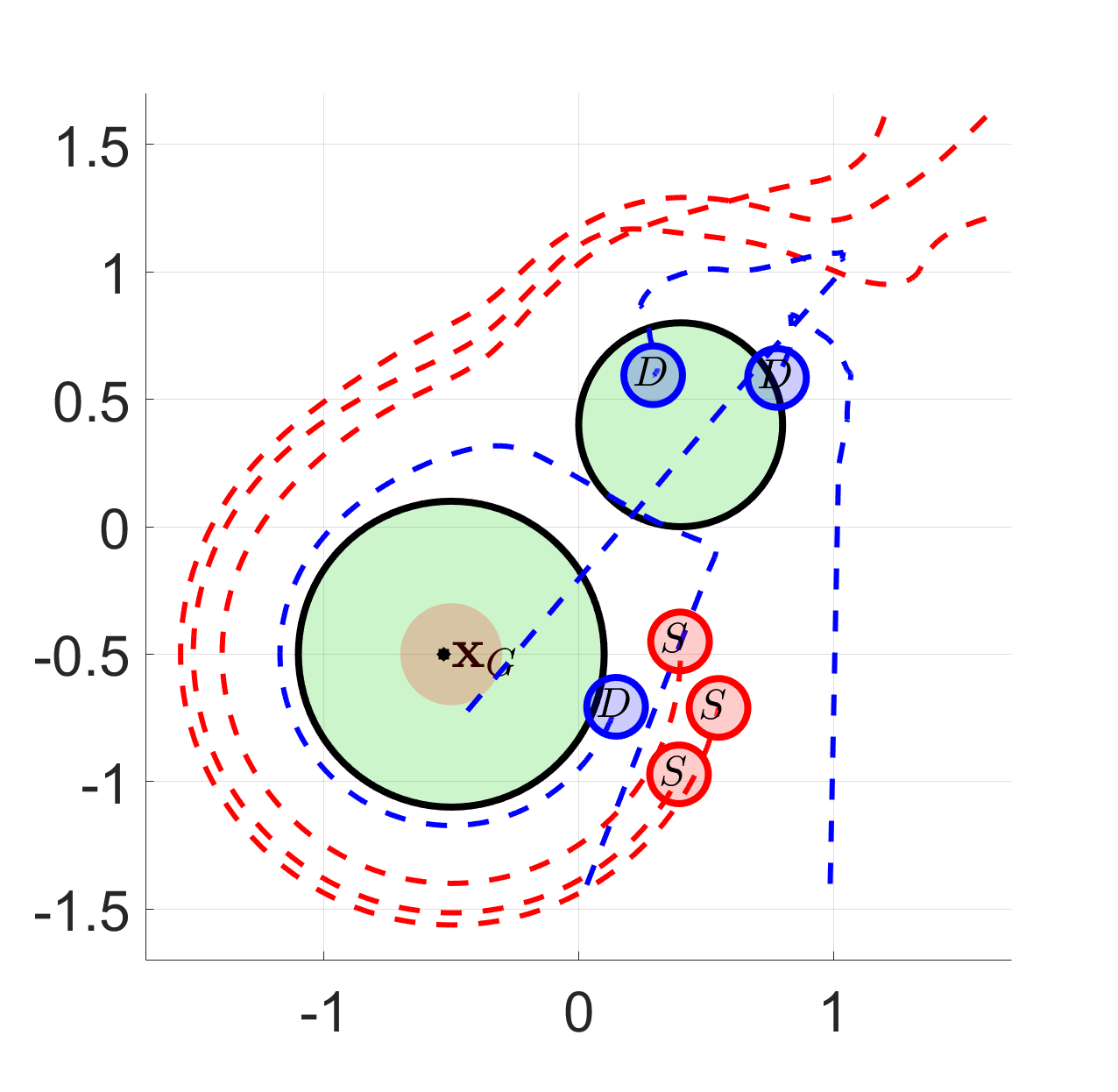}}
	\caption{Preventing the breaching of the protected zone. In these simulations, the dog is shown in blue and the sheep is shown in red. The green disc represents the protected zone. The nominal task of the red agent is to go straight towards its goal $\boldsymbol{x}_G$. However, since this would result in infiltration of the protected zone, the dog intervenes using the control algorithm presented in \eqref{dog_control_with_col}. In \ref{fig:dC0}, we defend two protected zones from three sheep.}
	\label{fig:defendingprotectedzon}
\end{figure*}

\begin{figure*}
	\centering     
	\subfigure[$t = 0s$ ]{\label{fig:dA1}\includegraphics[trim={0.0cm 0cm 0cm 0cm},clip,width=0.495\columnwidth]{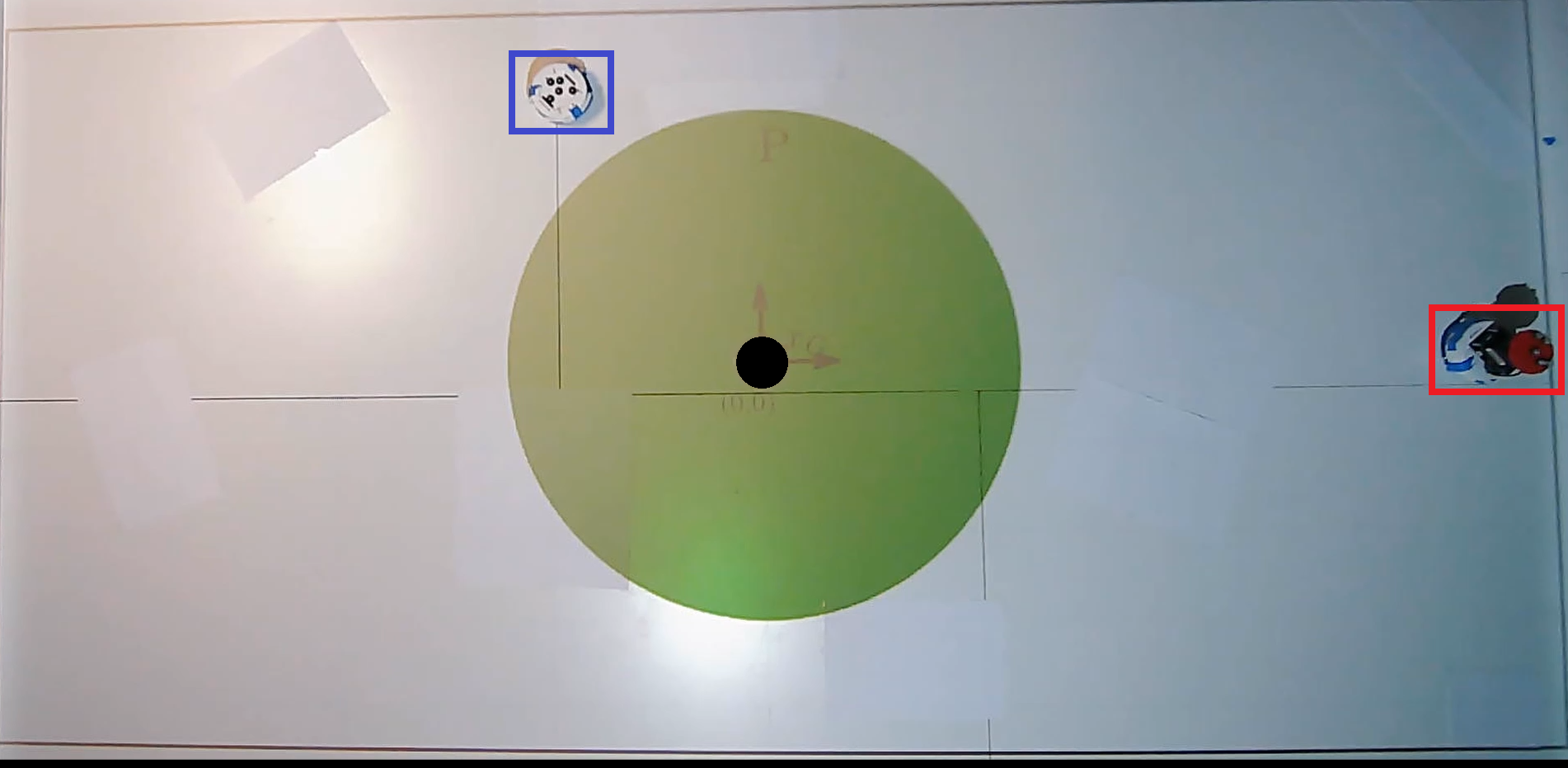}}
	\subfigure[$t = 3s$ ]{\label{fig:dB1}\includegraphics[trim={0cm 0cm 0cm 0cm},clip,width=0.495\columnwidth]{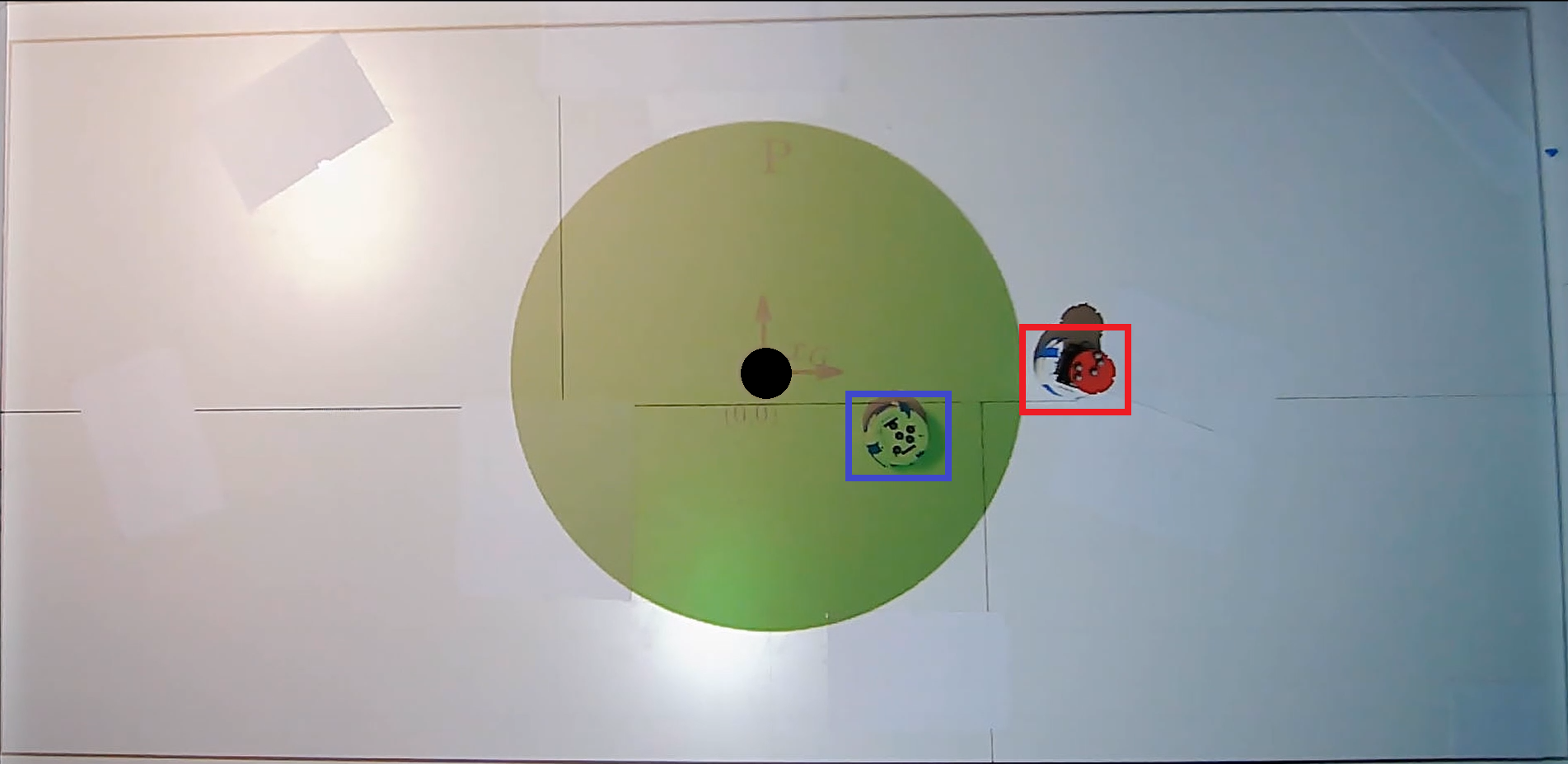}}
	\subfigure[$t = 20s$ ]{\label{fig:dC1}\includegraphics[trim={0cm 0cm 0cm 0cm},clip,width=.495\columnwidth]{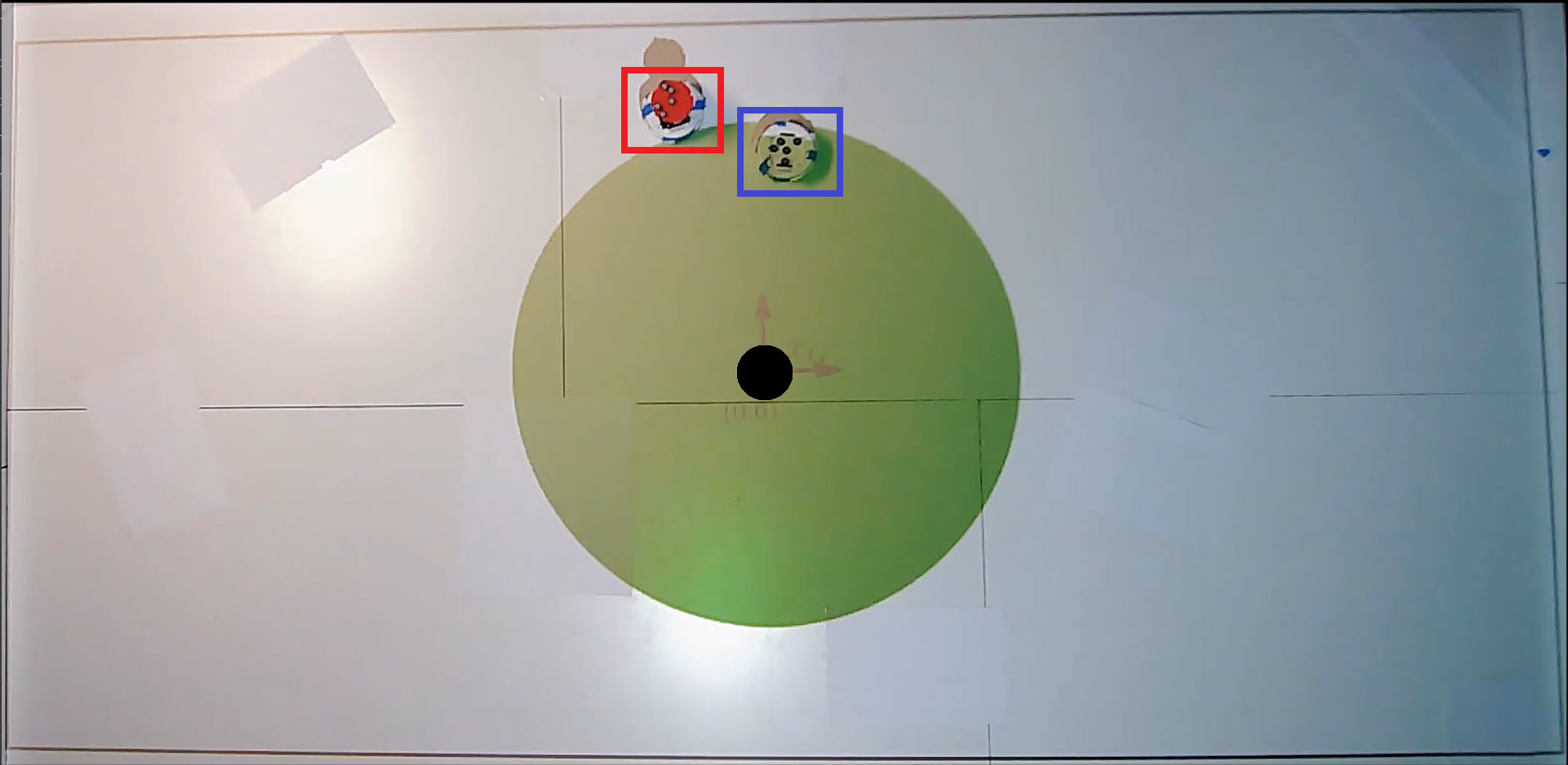}}
		\subfigure[$t = 55s$ ]{\label{fig:dD1}\includegraphics[trim={0cm 0cm 0cm 0cm},clip,width=.495\columnwidth]{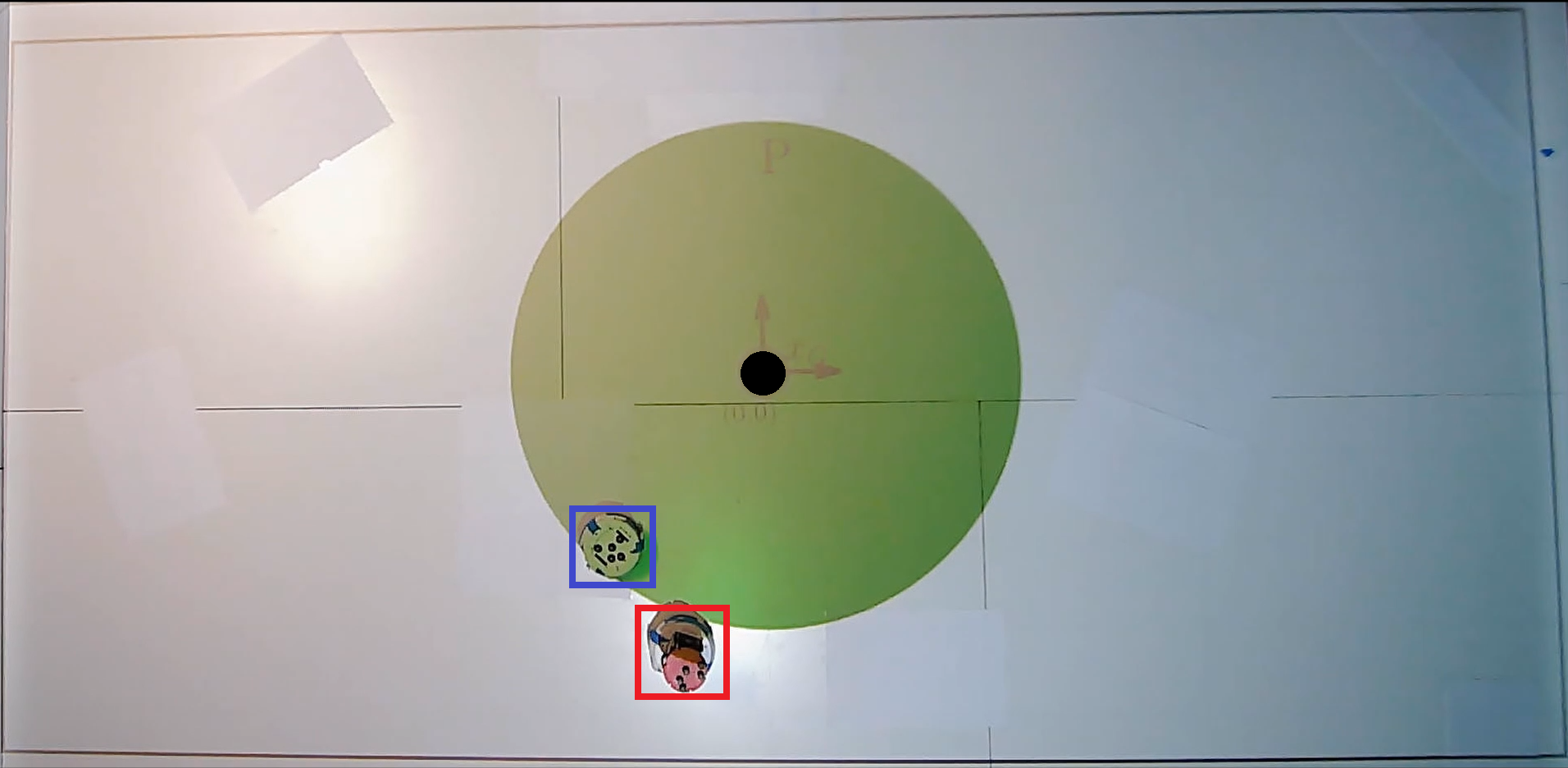}}
	\caption{Hardware experiment with one dog robot preventing one sheep from the breaching of the protected zone. The dog robot is highlighted in blue and the sheep in red. The goal position $x_G$ is at the center of the protected zone and given as a black solid circle. The nominal task of the sheep is to go straight towards its goal $\boldsymbol{x}_G$. However, since this would result in infiltration of the protected zone, the dog intervenes using the control algorithm presented in \eqref{dog_control}. Video at \url{https://tinyurl.com/2p9fjeft}. }
	\label{fig:defendingprotectedzon1}
\end{figure*}

\begin{figure*}
	\centering     
	\subfigure[$t = 0s$ ]{\label{fig:dA2}\includegraphics[trim={0.0cm 0cm 0cm 0cm},clip,width=0.495\columnwidth]{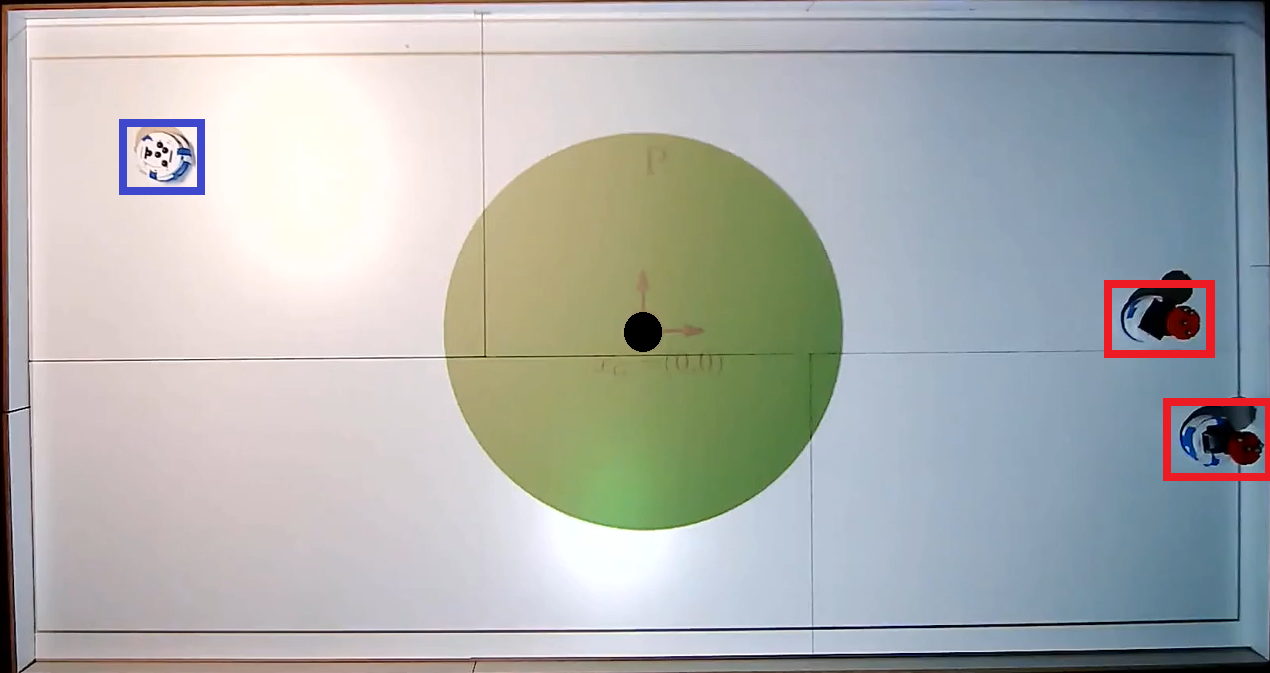}}
	\subfigure[$t = 6s$ ]{\label{fig:dB2}\includegraphics[trim={0cm 0cm 0cm 0cm},clip,width=0.495\columnwidth]{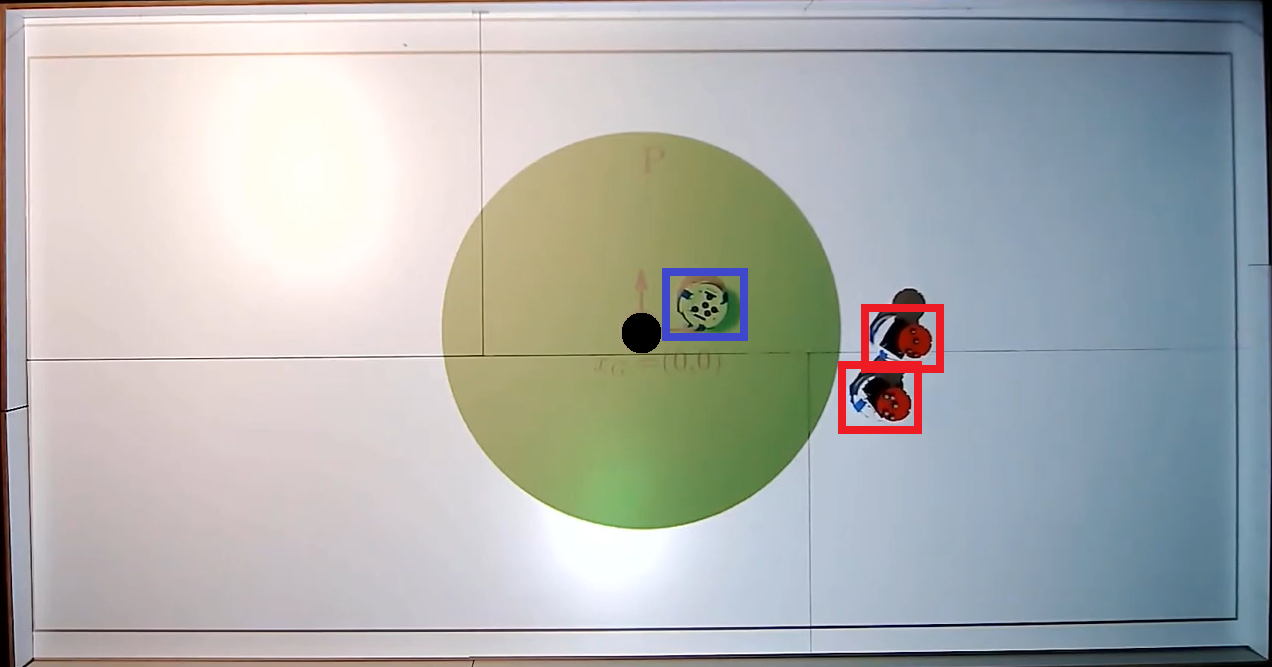}}
	\subfigure[$t = 26s$]{\label{fig:dC2}\includegraphics[trim={0cm 0cm 0cm 0cm},clip,width=.495\columnwidth]{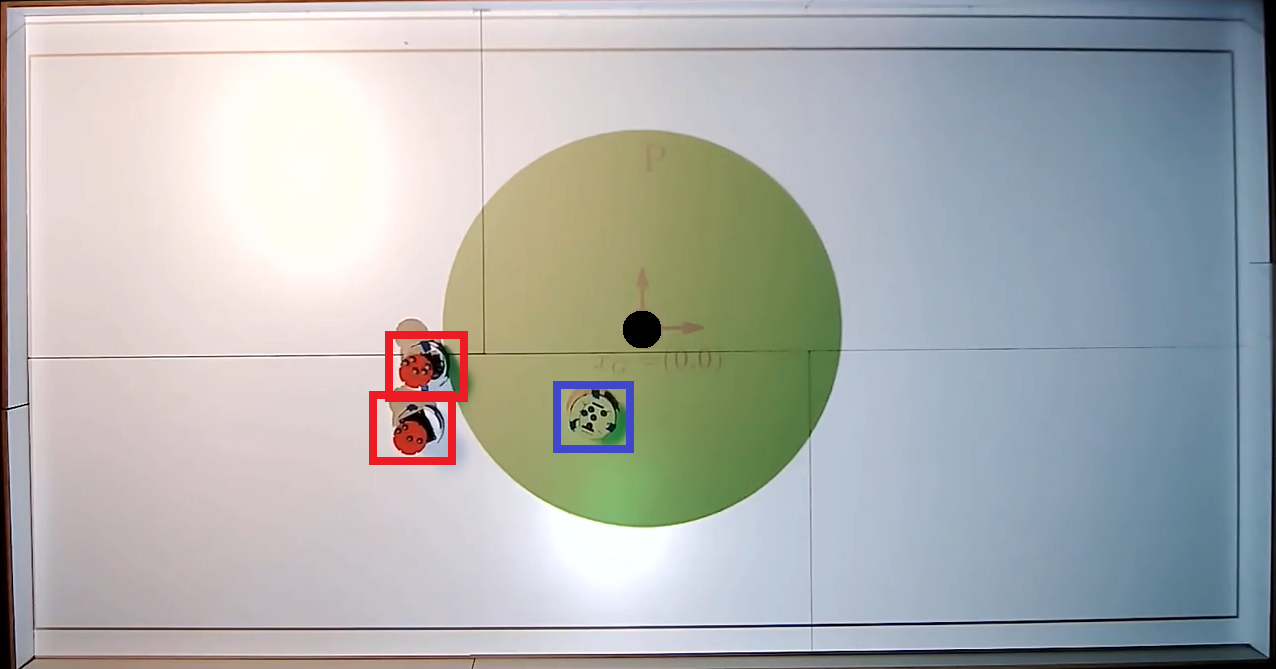}}
		\subfigure[$t = 35s$ ]{\label{fig:dD2}\includegraphics[trim={0cm 0cm 0cm 0cm},clip,width=.495\columnwidth]{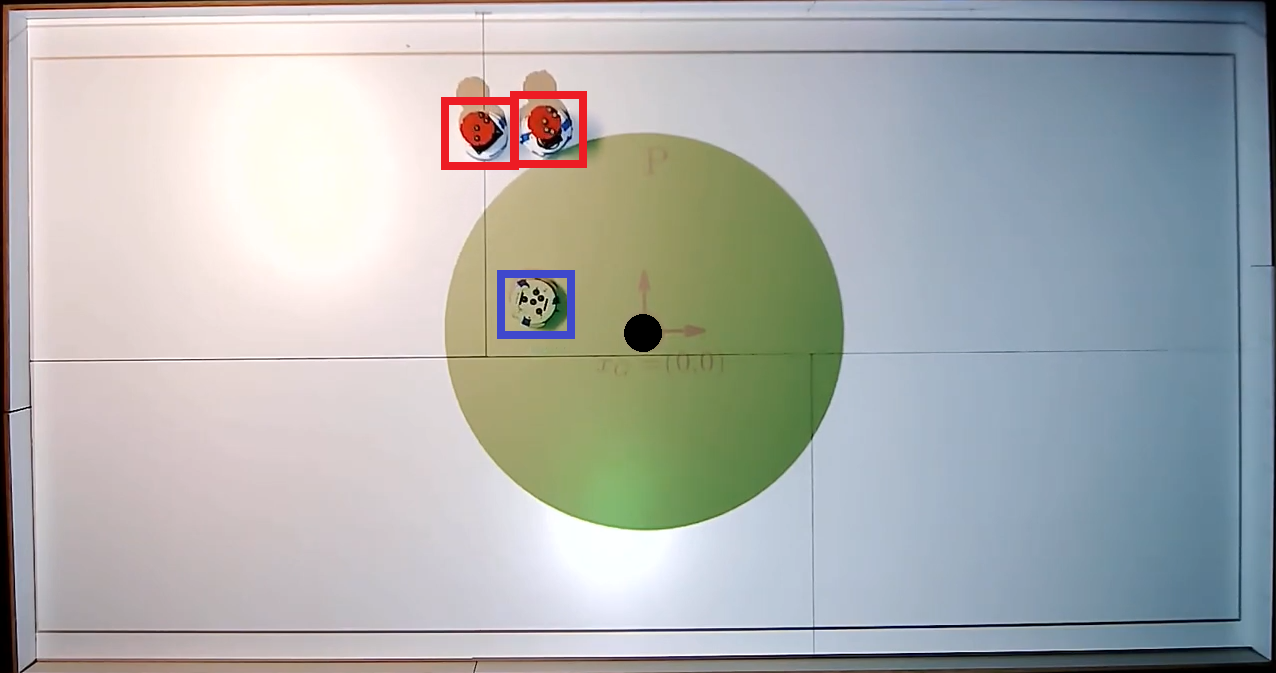}}
	\caption{Hardware experiment with one dog robot preventing two sheep from the breaching of the protected zone. The dog robot is highlighted with a blue box and sheep using a red box. The goal position $\boldsymbol{x}_G$ is at the center of the protected zone and shown as a solid black dot. Video at \url{https://tinyurl.com/37rduh43}.}
	\label{fig:defendingprotectedzon2}
\end{figure*}

\begin{figure*}
	\centering     
	\subfigure[$t = 0s$ ]{\label{fig:dA3}\includegraphics[trim={0.0cm 0cm 0cm 0cm},clip,width=0.495\columnwidth]{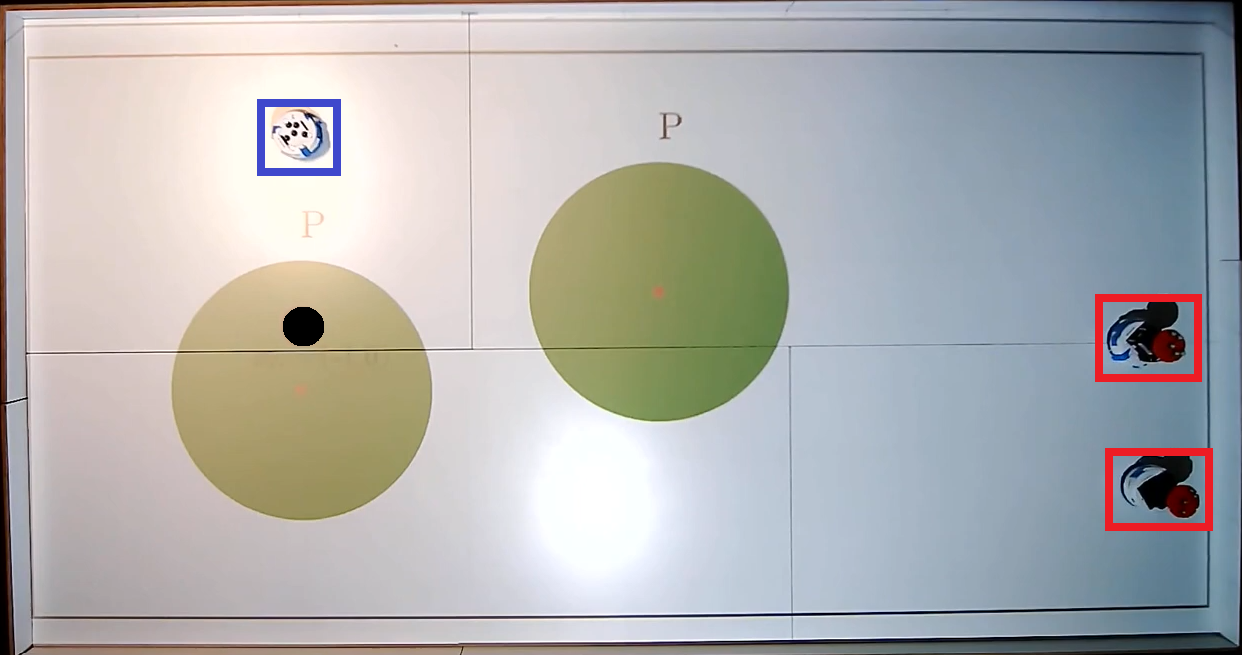}}
	\subfigure[$t = 7s$ ]{\label{fig:dB3}\includegraphics[trim={0cm 0cm 0cm 0cm},clip,width=0.495\columnwidth]{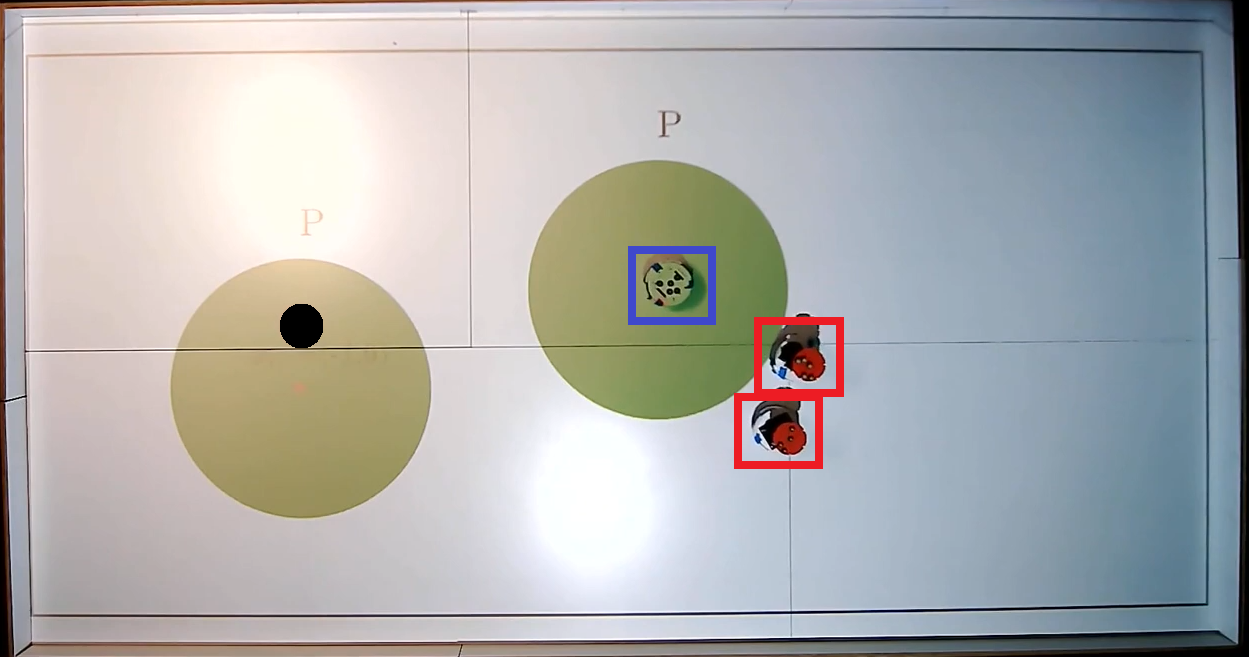}}
	\subfigure[$t = 10s$ ]{\label{fig:dC3}\includegraphics[trim={0cm 0cm 0cm 0cm},clip,width=.495\columnwidth]{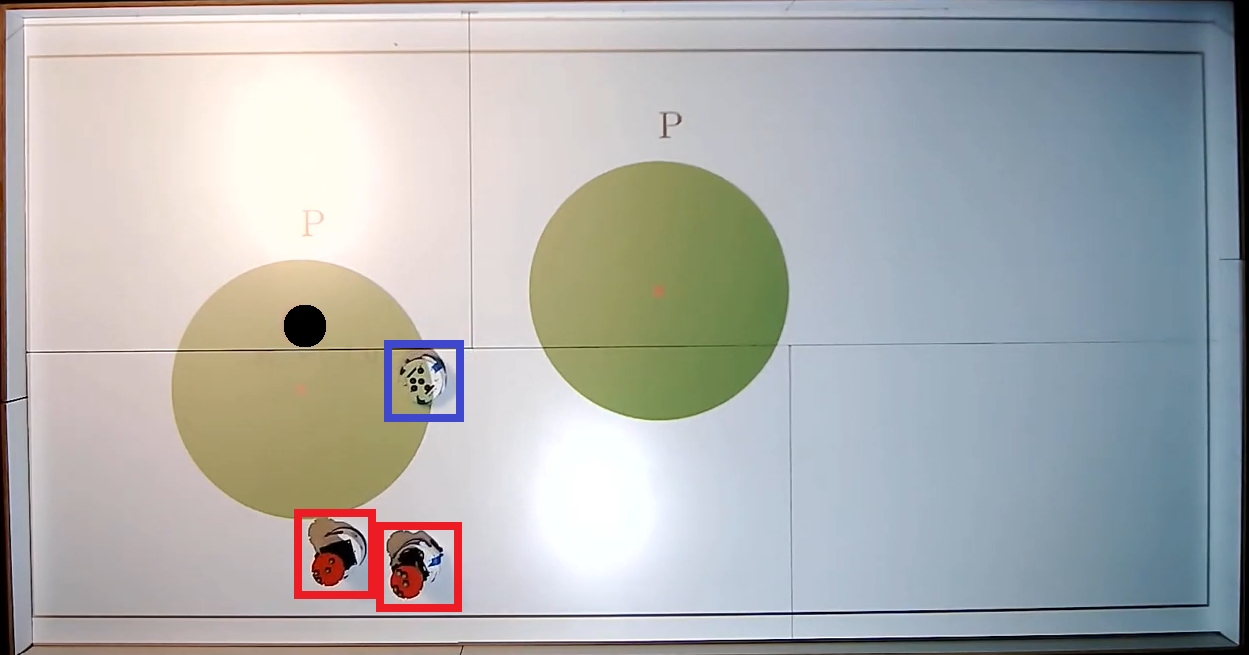}}
	\subfigure[$t = 14s$ ]{\label{fig:dC3}\includegraphics[trim={0cm 0cm 0cm 0cm},clip,width=.495\columnwidth]{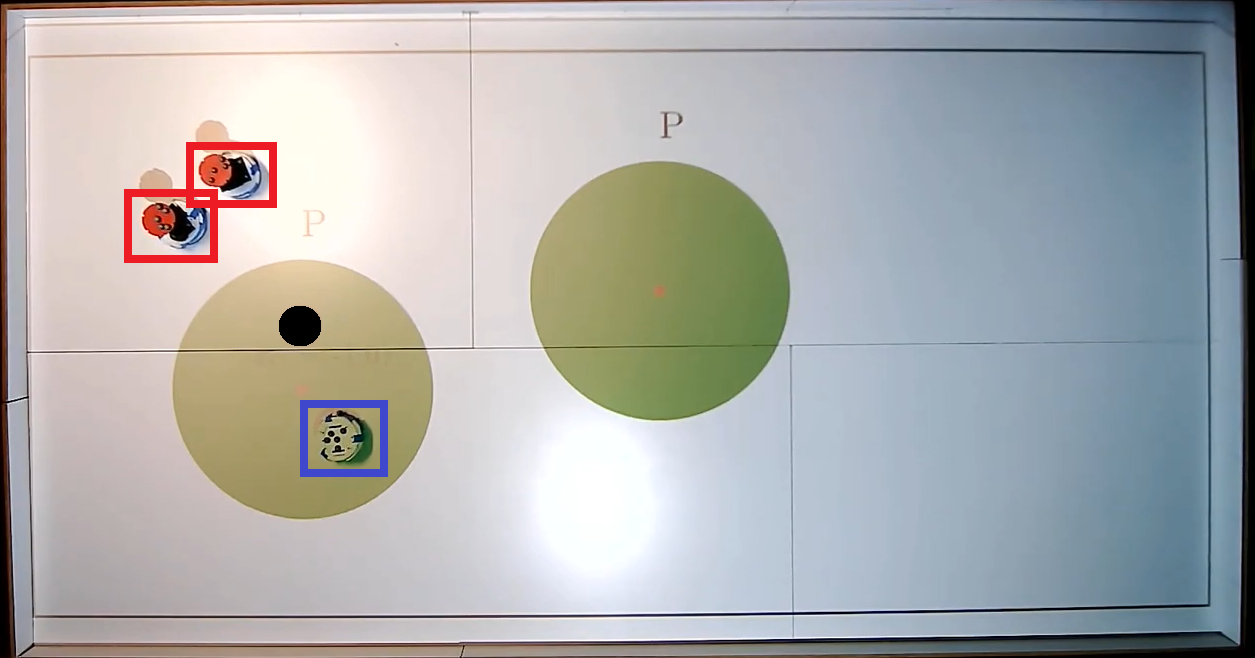}}
	\caption{Hardware experiment with one dog robot preventing two sheep from the breaching of two protected zones. The goal lies in the left most protected zone. Video at \url{ https://tinyurl.com/ycuyhwe6}. }
	\label{fig:defendingprotectedzon3}
\end{figure*}

\subsection{Numerical Simulation}
We represent the protected zone using a circular disc with radius $R_p$ and its center at the origin \textit{i.e.} $\boldsymbol{x}_P = \boldsymbol{0}$. In our simulations, we purposefully choose the agent's goal  $\boldsymbol{x}_{G} = \boldsymbol{x}_P$ so that the sheep are motivated to breach the protected zone should the dog robots not interfere. Thus, this is an adversarial scenario. The initial position $\boldsymbol{x}_{S_i}(0)$ of all the sheep are chosen such that they are all close to each other. This is done to ensure that the sheep have enough time to stabilize/cohese as a flock before interacting with the dog robots. The initial position $\boldsymbol{x}^{all}_D(0)$ of the dog robots are chosen randomly within the area of operation. The sheep's velocities are calculated using \eqref{sheepdynamics}. The values of the gains in the sheep dynamics were taken as $k_G = 1$, $k_S = 0.3$ and $k_D = 0.08$.

The velocities of the dog robot was obtained using eqn. \eqref{dog_control_with_col}. The hyperparameters $\alpha,\beta,\gamma$ are tuned satisfy the conditions on the design parameters (\ref{eq:condition_on_p1}, \ref{eq:condition_on_p2}) . Figure \ref{fig:defendingprotectedzon} shows three simulation results for this behavior. In these simulations, we varied the initial position of the sheep (blue), the dog (red), number of sheep and the number of dogs. As can be noticed from the pictures, in all three scenarios, the dogs robots able to successfully intercept the sheep and prevent them from entering the protected zone while also avoiding collision with the sheep.
\subsection{Monte Carlo Simulations}
We further study the performance of the proposed control strategy by using Monte Carlo simulations with varying initial configurations  and varying number of sheep $n$ and dog robots $m$. The values of the constants in sheep dynamics are $k_G = 1$, $k_S = 0.3$ and $k_D = 0.08$. We vary $n$ and $m$ from 1 to 10 and for a given pair of ($n,m$) we run the simulation for a hundred times  with a random initialization of $\boldsymbol{x}_0=(\boldsymbol{x}_{S_1}(0),\cdots,\boldsymbol{x}_{S_n}(0),\boldsymbol{x}_{D_1}(0),\cdots,\boldsymbol{x}_{D_m}(0))$ in every run.
Table \ref{tab:A_without_col} reports these results. Each entry of this table reports the percentage success rate \textit{i.e.} in how many cases the sheep were diverted away from the protected zone. As can be seen, almost all entries are 100, which proves the success of our algorithm. 
\begin{table}[ht]
\renewcommand{\arraystretch}{1}
\caption{Performance of the proposed strategy with varying number of sheep and dog robots. Here, we did not consider collision avoidance constraints \textit{i.e.} the dogs were allowed to run into the sheep. } \label{tab:A_without_col}
\centering
	\begin{tabular}{|m{0.09\textwidth}|m{0.05\textwidth}|m{0.05\textwidth}|m{0.05\textwidth}|m{0.05\textwidth}|m{0.04\textwidth}|}
	\hline
\backslashbox{$N_{S}$}{$N_{D}$} & \textbf{2} & \textbf{4} & \textbf{6} & \textbf{8} & \textbf{10}
\\ \hline
\textbf{2}   &   100  &     100  &   100  &   100  &   100 \\
    \hline
\textbf{4}   &   100  &    100  &   100  &   100  &   100 \\
    \hline
\textbf{6}   &   100  &     98 &  100  &  100  &  100\\
    \hline
\textbf{8}   &  100  &     98  &  100  &  100  &   98 \\
    \hline
\textbf{10}   &   100  &     98 &   98  &   100  &   96 \\
    \hline
\end{tabular}
\end{table}
Further, we considered the impact of including collision avoidance constraints. These results are reported in Table \ref{tab:A_with_col}. Because of additional constraints, it is possible that collision avoidance conflicts with the defending constraint. As a result, we do not observe as good successes in this case compared to when there are no collision avoidance constraints.
\begin{table}[ht]
\renewcommand{\arraystretch}{1}
\caption{Performance of the proposed strategy with varying number of sheep and dog robots. Here we considered collision avoidance constraints in the dynamics of the dogs. } \label{tab:A_with_col}
\centering
	\begin{tabular}{|m{0.09\textwidth}|m{0.05\textwidth}|m{0.05\textwidth}|m{0.05\textwidth}|m{0.05\textwidth}|m{0.04\textwidth}|}
	\hline
\backslashbox{$N_{S}$}{$N_{D}$} & \textbf{2} & \textbf{4} & \textbf{6} & \textbf{8} & \textbf{10}
\\ \hline
\textbf{2}   &   72  &     99  &   99  &   100  &   100 \\
    \hline
\textbf{4}   &   62  &     74  &   90  &   97  &   100 \\
    \hline
\textbf{6}   &   28  &     83  &  99  &   99  &  100\\
    \hline
\textbf{8}   &  63  &     82  &   100  &   100  &   100 \\
    \hline
\textbf{10}   &   70  &     79  &   90  &   91  &   94 \\
    \hline
\end{tabular}
\end{table}
\subsection{Hardware Experiments}
Finally, we tested our algorithm in robots in the multirobot test arena in our lab. It consists of a 14ft $\times$ 7ft platform, several Khepera IV robots and additionally eight Vicon cameras for motion tracking. All control inputs are computed on a desktop and conveyed to the robots over WiFi.  While we developed our algorithms assuming that the dynamics of all agents are single-integrator based, the robots have unicycle dynamics given by 
\begin{align}
    \begin{pmatrix}
    \dot{x} \\
    \dot{y}\\
    \dot{\theta}
    \end{pmatrix} = \begin{pmatrix}
    v\cos{\theta} \\
    v\sin{\theta}\\
    \omega
    \end{pmatrix}
\end{align}
Thus, we do a minor adjustment to map the inputs computed from our algorithms to the angular speed and forward translational speed of these robots. This is done by considering a point at a distance $d$ on the $x_b$ axis of the body frame of the robot:
\begin{align}
\label{omega}
    &\boldsymbol{x} = \begin{pmatrix}
    x + d\cos{\theta} \\
    y + d\sin{\theta}
    \end{pmatrix} \nonumber \\
    \implies &\dot{\boldsymbol{x}} = \underbrace{\begin{pmatrix}
    \cos{\theta} &-\sin{\theta} \\ 
    \sin{\theta} &\cos{\theta}
    \end{pmatrix} \begin{pmatrix}
    1 &0 \\ 
    0 &d
    \end{pmatrix}}_{M}\begin{pmatrix}
    v \\ 
    \omega
    \end{pmatrix} = \tilde{\boldsymbol{u}} \nonumber \\
    \implies &\begin{pmatrix}
    v \\ 
    \omega
    \end{pmatrix} = M^{-1}\tilde{\boldsymbol{u}}
\end{align}
For the robots representing the sheep,  $\tilde{\boldsymbol{u}}$ is obtained from \eqref{sheepdynamics} while for the robots representing the dog,  $\tilde{\boldsymbol{u}}$ is obtained from \eqref{dog_control}. In Fig. \ref{fig:defendingprotectedzon1}, we have one sheep (in red box) and one dog robot (in blue box). The protected zone is highlighted in green and the goal and center of the protected zone are the black dot. We use \eqref{dog_control} to compute the velocity of the dog robot and convert it to angular speed and forward translational speed using \eqref{omega}. As can be noted from the snapshots, the dog robot is able successfully  defend the zone from the sheep. Next we consider multiple sheep in Fig. \ref{fig:defendingprotectedzon2}. As can be seen from the snapshots, in this case, the dog is able to defend the zone from both sheep. Finally, in  Fig. \ref{fig:defendingprotectedzon3} we demonstrate that our approach is compositional \textit{i.e.} we can have multiple protected zones. In this figure, we purposefully kept the goal of the sheep in the left most protected zone. This way, the sheep would be incentivized to breach both the protected zones. Yet still, our algorithm is able to find velocities for dogs to defend both the zones from both sheep.

\section{Conclusions}
 \label{conclusions}
 
In this paper, we developed a novel optimization-based control strategy for a group of dog robots to prevent a herd of sheep agents from breaching  a protected zone. We have proven the feasibility of the algorithm for the single dog v/s single sheep case. Empirical results show that our designed controller can defend the protected zone from a flock of multiple sheep using multiple dogs as well. The results also show that the algorithm is composable and allows us to include multiple protected zones. Future work will focus on finding design parameters of the constraints such that the velocities computed by our controller does not exceed actuator limits. We also aim to perform hardware experiments with a higher number of sheep and dogs using Khepera robots. Further, in our current work, we assumed known dynamics of sheep. In future, we plan to extend this to the case where we learn their dynamics online while simultaneously performing defense of the protected zone. 




\bibliographystyle{IEEEtran}
\bibliography{root}






\end{document}